\providecommand{\tabularnewline}{\\}
\providecommand{\algorithmname}{Algorithm}
\theoremstyle{plain}
\newtheorem{thm}{\protect\theoremname}
\title{IJCAI--18 Formatting Instructions\thanks{These match the formatting instructions of IJCAI-07. The support of IJCAI, Inc. is acknowledged.}}
\author{
Shuo Chen$^1$,
Chen Gong$^1$,
Jian Yang$^1$,
Xiang Li$^1$,
Yang Wei$^1$,
Jun Li$^{1,2}$\\
$^1$DeepInsight@PCALab, Nanjing University of Science and Technology.\\
$^2$Department of Electrical and Computer Engineering, Northeastern University. \\
 \{shuochen, chen.gong, csjyang, xiang.li.implus, csywei\}@njust.edu.cn, junl.mldl@gmail.com.
}
\providecommand{\theoremname}{Theorem}
\begin{document}

\title{Adversarial Metric Learning}
\maketitle
\begin{abstract}
In the past decades, intensive efforts have been put to design various
loss functions and metric forms for metric learning problem. These
improvements have shown promising results when the test data is similar
to the training data. However, the trained models often fail to produce
reliable distances on the ambiguous test pairs due to the distribution
bias between training set and test set. To address this problem, the
Adversarial Metric Learning (AML) is proposed in this paper, which
automatically generates adversarial pairs to remedy the distribution
bias and facilitate robust metric learning. Specifically, AML consists
of two adversarial stages, \emph{i.e.} confusion and distinguishment.
In confusion stage, the ambiguous but critical adversarial data pairs
are adaptively generated to mislead the learned metric. In distinguishment
stage, a metric is exhaustively learned to try its best to distinguish
both the adversarial pairs and the original training pairs. Thanks
to the challenges posed by the confusion stage in such competing process,
the AML model is able to grasp plentiful difficult knowledge that
has not been contained by the original training pairs, so the discriminability
of AML can be significantly improved. The entire model is formulated
into optimization framework, of which the global convergence is theoretically
proved. The experimental results on toy data and practical datasets
clearly demonstrate the superiority of AML to the representative state-of-the-art
metric learning methodologies.
\end{abstract}

\section{Introduction}

The calculation of similarity or distance between a pair of data points
plays a fundamental role in many machine learning and pattern recognition
tasks such as retrieval \cite{yang2010_retrival_metric_learning},
verification \cite{IJCAI2017_verification}, and classification \cite{yang2016_empirical}.
Therefore, ``Metric Learning'' \cite{bishop2006pattern,weinberger2009distance}
was proposed to enable an algorithm to wisely acquire the appropriate
distance metric so that the precise similarity between different examples
can be faithfully reflected.
\begin{figure}[t]
\begin{spacing}{0.3}
\noindent \includegraphics[scale=0.395]{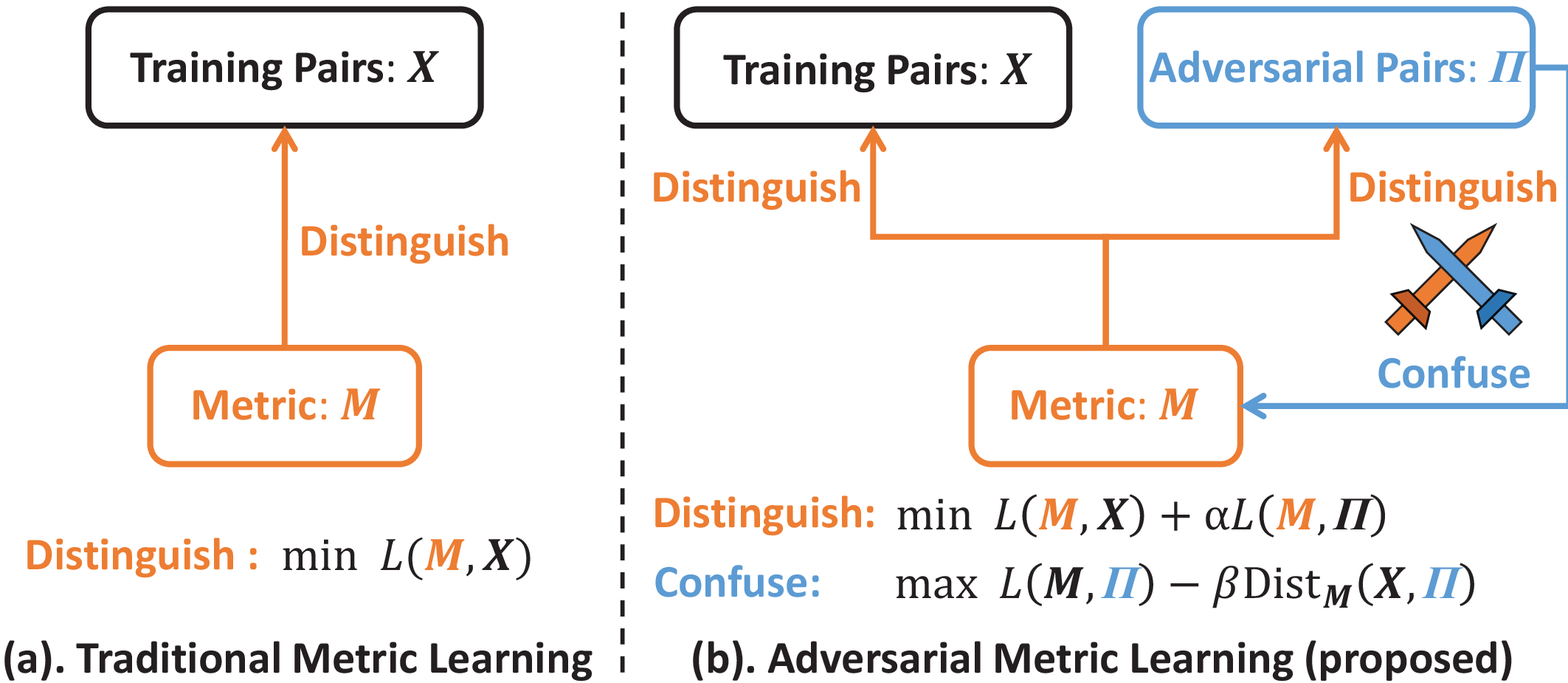}
\end{spacing}

\begin{spacing}{0.5}
\noindent \caption{\label{fig:The-framework-of}The comparison of traditional metric
learning and our proposed model. (a) Traditional metric learning directly
minimizes the loss $L(\boldsymbol{M},\boldsymbol{X})$ to distinguish
the training pairs. (b) Our proposed AML contains a distinguishment
stage and a confusion stage, and the original training pairs and the
adversarial pairs are jointly invoked to obtain an accurate $\boldsymbol{M}$.
The confusion stage learns the adversarial pairs $\boldsymbol{\varPi}$
in the local region of $\boldsymbol{X}$, by maximizing $L(\boldsymbol{M},\boldsymbol{\varPi})$
as well as the distance regularizer $-\text{Dist}_{\boldsymbol{M}}(\boldsymbol{X},\boldsymbol{\varPi})$.}
\end{spacing}
\end{figure}

In metric learning, the similarity between two example vectors $\boldsymbol{x}$
and $\boldsymbol{x}'$ is usually expressed by the distance function
$\text{Dist}(\boldsymbol{x},\thinspace\boldsymbol{x}')$. Perhaps
the most commonly used distance function is Mahalanobis distance,
which has the form $\text{Dist}_{\boldsymbol{M}}(\boldsymbol{x},\thinspace\boldsymbol{x}')=(\boldsymbol{x}-\boldsymbol{x}')^{\top}\boldsymbol{M}(\boldsymbol{x}-\boldsymbol{x}')$\footnote{For simplicity, the notation of ``square'' on $\text{Dist}_{\boldsymbol{M}}(\boldsymbol{x},\thinspace\boldsymbol{x}')$
has been omitted and it will not influence the final output.}. Here the symmetric positive definite (SPD) matrix $\boldsymbol{M}$
should be learned by an algorithm to fit the similarity reflected
by training data. By decomposing $\boldsymbol{M}$ as $\boldsymbol{M}=\boldsymbol{P}^{\top}\boldsymbol{P}$,
we know that Mahalanobis distance intrinsically calculates the Euclidean
distance in a projected linear space rendered by the projection matrix
$\boldsymbol{P}$, namely $\text{Dist}_{\boldsymbol{M}}(\boldsymbol{x},\thinspace\boldsymbol{x}')=||\boldsymbol{P}(\boldsymbol{x}-\boldsymbol{x}')||_{2}^{2}$.
Consequently, a large amount of models were proposed to either directly
pursue the Mahalanobis matrix \cite{davis2007information,zadeh2016geometric,zhang2017efficient_low-rank_metric_learning}
or indirectly learn such a linear projection $\boldsymbol{P}$ \cite{lu2014neighborhood_W_Metric_learning,harandi2017joint_metric_learning}.
Furthermore, considering that above linear transformation is not flexible
enough to characterize the complex data relationship, some recent
works utilized the deep neural networks, \emph{e.g.}\ Convolutional
Neural Network (CNN) \cite{simo2015discriminative,oh2016_lifted_embedding_metric_learning},
to achieve the purpose of non-linearity. Generally, the kernel method
or CNN based nonlinear distance metrics can be summarized as $\text{Dist}_{\boldsymbol{P},\thinspace\mathcal{W}}(\boldsymbol{x},\thinspace\boldsymbol{x}')=||\boldsymbol{P}(\mathcal{W}(\boldsymbol{x})-\mathcal{W}(\boldsymbol{x}'))||_{2}^{2}$,
in which the output of neural network is denoted by the mapping $\mathcal{W}(\cdot)$. 

However, above existing approaches simply learn the linear or non-linear
metrics via designing different loss functions on the original training
pairs. During the test phase, due to the distribution bias of training
set and test set, some ambiguous data pairs that are difficult to
be distinguished by the learned metric may appear, which will significantly
impair the algorithm performance. To this end, we propose the Adversarial
Metric Learning (AML) to learn a robust metric, which follows the
idea of adversarial training \cite{goodfellow2014explaining,li2017_adver_mem},
and is able to generate ambiguous but critical data pairs to enhance
the algorithm robustness. As shown in Fig.~\ref{fig:The-framework-of},
compared with the traditional metric learning methods that only distinguish
the given training pairs, our AML learns the metric to distinguish
both original training pairs and the generated adversarial pairs.
Here, the adversarial data pairs are automatically synthesized by
the algorithm to confuse the learned metric as much as possible. The
adversarial pairs $\boldsymbol{\varPi}$ and the learned metric $\boldsymbol{M}$
form the adversarial relationship and each of them tries to ``beat''
the other one. Specifically, adversarial pairs tend to introduce the
ambiguous examples which are difficult for the learned metric to correctly
decide their (dis)similarities (\emph{i.e.}\ confusion stage), while
the metric makes its effort to discriminate the confusing adversarial
pairs (\emph{i.e.}\ distinguishment stage). In this sense, the adversarial
pairs are helpful for our model to acquire the accurate metric. To
avoid the iterative competing, we convert the adversarial game to
an optimization problem which has the optimal solution from the theoretical
aspects. In the experiments, we show that the robust Mahalanobis metric
learned by AML is superior to the state-of-the-art metric learning
models on popular datasets with classification and verification tasks.

The most prominent advantage of our proposed AML is that the extra
data pairs (\emph{i.e.}\ adversarial pairs) are explored to boost
the discriminability of the learned metric. In fact, several metric
learning models have been proposed based on data augmentations \cite{ahmed2015improved_deep_metric,zagoruyko2015_2-channel_siamase},
or pair perturbations \cite{NIPS2015_RVML,ye_learning_M-distance}.
However, the virtual data generated by these methods are largely based
on the prior which may significantly differ from the practical test
data, so their performances are rather limited. In contrast, the additional
adversarial pairs in AML are consciously designed to mislead the learning
metric, so they are formed in an intentional and realistic way. Specifically,
to narrow the searching space of adversarial pairs, AML establishes
the adversarial pairs within neighborhoods of original training pairs
as shown in Fig.~\ref{fig:The-distributions-of}. Thanks to the learning
on both real and generated pairs, the\emph{ }discriminability of our
method can be substantially improved. 

\begin{spacing}{0.5}
\noindent 
\begin{figure}[t]
\begin{spacing}{0.3}
\noindent \begin{centering}
\includegraphics[scale=0.34]{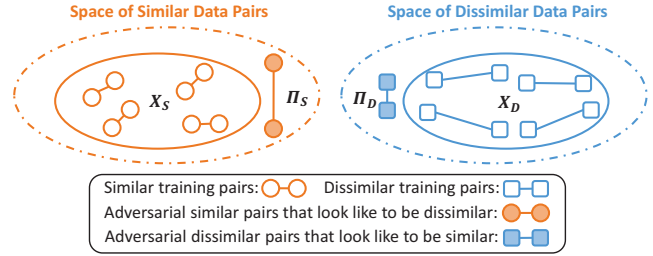}
\par\end{centering}
\end{spacing}
\caption{Generations of adversarial similar pairs and adversarial dissimilar
pairs. Similar and dissimilar pairs are marked with orange balls and
blue blocks, respectively. Hollow balls and blocks denote the original
training examples, while filled balls and blocks denote the adversarial
examples which are automatically generated by our model. Note that
the generated two points constituting the adversarial similar pairs
(\emph{i.e.}\ $\boldsymbol{\varPi}_{S}$) are far from each other,
which describe the extreme cases for two examples to be similar pairs.
Similarly, the generated two points constituting the adversarial dissimilar
pairs (\emph{i.e.}\ $\boldsymbol{\varPi}_{D}$) are closely distributed,
which depict the worst cases for two examples to be dissimilar pairs.\label{fig:The-distributions-of}}
\end{figure}

\end{spacing}

The main contributions of this paper are summarized as:
\begin{itemize}
\item We propose a novel framework dubbed Adversarial Metric Learning (AML),
which is able to generate adversarial data pairs in addition to the
original given training data to enhance the model discriminability.
\item AML is converted to an optimization framework, of which the convergence
is analyzed.
\item AML is empirically validated to outperform state-of-the-art metric
learning models on typical applications. 
\end{itemize}

\section{Adversarial Metric Learning\label{sec:Methodology}}

We first introduce some necessary notations in Section \ref{subsec:Preliminaries},
and then explain the optimization model of the proposed AML in Section
\ref{subsec:Model-Establishment}. Finally, we provide the iterative
solution as well as the convergence proof in Section \ref{subsec:Algorithm}
and Section \ref{subsec:Convergence-Analysis}, respectively. 

\subsection{Preliminaries\label{subsec:Preliminaries}}

Let $\boldsymbol{X}=[\boldsymbol{X}_{1},\boldsymbol{X}_{2},\cdots,\boldsymbol{X}_{N}]\in\mathbb{R}^{2d\times N}$
be the matrix of $N$ training example pairs, where $\boldsymbol{X}_{i}=[\boldsymbol{x}_{i}^{\top},\thinspace\boldsymbol{x}_{i}'^{\top}]^{\top}\in\mathbb{R}^{2d}$
$\left(i=1,2,\cdots,N\right)$ consists of a pair of $d$-dimensional
training examples. Besides, we define a label vector $\bm{y}\in\{-1,1\}^{N}$
of which the $i$-th element $y_{i}$ represents the relationship
of the pairwise examples recorded by the $i$-th column of $\boldsymbol{X}$,
namely $y_{i}=1$ if $\boldsymbol{x}_{i}$ and $\boldsymbol{x}_{i}'$
are similar, and $y_{i}=-1$ otherwise. Based on the supervised training
data, the Mahalanobis metric $\boldsymbol{M}\in\mathbb{R}^{d\times d}$
is learned by minimizing the general loss function $L(\boldsymbol{M},\boldsymbol{X},\boldsymbol{y})$,
namely{\setlength\abovedisplayskip{5pt}\setlength\belowdisplayskip{1pt}
\begin{equation}
\underset{\boldsymbol{M}\in\mathcal{S}}{\min}\thinspace\thinspace d(\boldsymbol{M})=L(\boldsymbol{M},\boldsymbol{X},\boldsymbol{y}),\label{eq:traditional_loss}
\end{equation}
}in which $\mathcal{S}$ denotes the feasible set for $\boldsymbol{M}$,
such as SPD constraint \cite{arsigny2007geometric_SPD}, bounded constraint
\cite{yang2016_empirical}, low-rank constraint \cite{harandi2017joint_metric_learning},
\emph{etc}. In our proposed AML, $N'$ generated adversarial pairs
are denoted by the matrix $\boldsymbol{\varPi}=[\boldsymbol{\varPi}_{1},\boldsymbol{\varPi}_{2},\cdots,\boldsymbol{\varPi}_{N'}]\in\mathbb{R}^{2d\times N'}$,
where $\boldsymbol{\varPi}_{i}=[\boldsymbol{\pi}_{i}^{\top},\thinspace\boldsymbol{\pi}_{i}'^{\top}]^{\top}\in\mathbb{R}^{2d}$
$\left(i=1,2,\cdots,N'\right)$ represents the $i$-th generated example
pair. By setting $N'$ to $N$ in this work, the distance $\text{Dist}_{\boldsymbol{M}}(\boldsymbol{X},\boldsymbol{\varPi})$
between $\boldsymbol{X}$ and $\boldsymbol{\varPi}$ is thus defined
as the sum of the Mahalanobis distances between all the pairwise examples
of $\boldsymbol{X}$ and $\boldsymbol{\varPi}$, \emph{i.e.}\ $\text{Dist}_{\boldsymbol{M}}(\boldsymbol{X},\boldsymbol{\varPi})=\sum_{i=1}^{N}\text{Dist}_{\boldsymbol{M}}(\boldsymbol{x}_{i},\boldsymbol{\pi}_{i})+\text{Dist}_{\boldsymbol{M}}(\boldsymbol{x}_{i}',\boldsymbol{\pi}_{i}')$.

\subsection{Model Establishment\label{subsec:Model-Establishment}}

As mentioned in the Introduction, our AML algorithm alternates between
learning the reliable distance metric (\emph{i.e.}\ distinguishment
stage) and generating the misleading adversarial data pairs (\emph{i.e.}\
confusion stage), in which the latter is the core of AML to boost
the learning performance. The main target of confusion stage is to
produce the adversarial pairs $\boldsymbol{\varPi}$ to confuse the
learned metric $\boldsymbol{M}$. That is to say, we should explore
the pair $\boldsymbol{\varPi}_{i}$ of which the similarity predicted
by $\boldsymbol{M}$ is opposite to its true label. Fig.~\ref{fig:The-distributions-of}
intuitively plots the generations of $\boldsymbol{\varPi}$. To achieve
this effect, we search the data pairs $\boldsymbol{\varPi}_{i}$ in
the neighborhood of $\boldsymbol{X}_{i}$ to violate the results predicted
by the learned metric. Specifically, the loss function $\boldsymbol{L}(\boldsymbol{M},\boldsymbol{\varPi},\boldsymbol{y})$
is expected to be as large as possible, while the distance $\text{Dist}_{\boldsymbol{M}}(\boldsymbol{X},\boldsymbol{\varPi})$
is preferred to be a small value in the following optimization objective{\setlength\abovedisplayskip{5pt}\setlength\belowdisplayskip{1pt}
\begin{equation}
\underset{\boldsymbol{\varPi}}{\max}\thinspace\thinspace c(\boldsymbol{\varPi})=L(\boldsymbol{M},\boldsymbol{\varPi},\boldsymbol{y})-\beta\text{Dist}_{\boldsymbol{M}}(\boldsymbol{X},\boldsymbol{\varPi}),\label{eq:adversarial_max}
\end{equation}
}in which the regularizer coefficient $\beta\in\mathbb{R}^{+}$ is
manually tuned to control the size of searching space. Since $\boldsymbol{\varPi}_{i}$
is found in the neighborhood of $\boldsymbol{X}_{i}$, the true label
of $\boldsymbol{\varPi}_{i}$ is reasonably assumed as $y_{i}$, \emph{i.e.}\
the label of $\boldsymbol{X}_{i}$. It means that Eq.~\eqref{eq:adversarial_max}
tries to find data pairs $\boldsymbol{\varPi}_{1},\boldsymbol{\varPi}_{2},\cdots,\boldsymbol{\varPi}_{N}$,
of which their metric results are opposite to their corresponding
true labels $y_{1},y_{2},\cdots,y_{N}$. Therefore, such an optimization
exploits the adversarial pairs $\boldsymbol{\varPi}$ to confuse the
metric $\boldsymbol{M}$. 

Nevertheless, Eq.~\eqref{eq:adversarial_max} cannot be directly taken
as a valid optimization problem, as it is not bounded which means
that Eq.~\eqref{eq:adversarial_max} might not have the optimal solution.
To avoid this problem and achieve the same effect with Eq.~\eqref{eq:adversarial_max},
we convert the maximization of the loss \emph{w.r.t.}\ the true labels
$\boldsymbol{y}$ to the minimization of the loss \emph{w.r.t.}\
the opposite labels $-\boldsymbol{y}$, because the opposite labels
yield the opposite similarities when they are used to supervise the
minimization of the loss function. Then the \emph{confusion} stage
is reformulated as {\setlength\abovedisplayskip{5pt}\setlength\belowdisplayskip{1pt}
\begin{equation}
\underset{\boldsymbol{\varPi}}{\min}\thinspace\thinspace C_{\boldsymbol{M}}(\boldsymbol{\varPi})=L(\boldsymbol{M},\boldsymbol{\varPi},-\boldsymbol{y})+\beta\text{Dist}_{\boldsymbol{M}}(\boldsymbol{X},\boldsymbol{\varPi}).\label{eq:adversarial_min}
\end{equation}
}The optimal solution to the above problem always exists, because
any loss function $L(\cdot)$ and distance operator $\text{Dist}_{\boldsymbol{M}}(\cdot)$
have the minimal values. 

By solving Eq.~\eqref{eq:adversarial_min}, we obtain the generated
adversarial pairs recorded in the matrix $\boldsymbol{\varPi}$, which
can be employed to learn a proper metric. Since these confusing adversarial
pairs are incorrectly predicted, the metric $\boldsymbol{M}$ should
exhaustively distinguish them to improve the discriminability. By
combining the adversarial pairs in $\boldsymbol{\varPi}$ and the
originally available training pairs in $\boldsymbol{X}$, the augmented
training loss utilized in the \emph{distinguishment} stage has a form
of {\setlength\abovedisplayskip{5pt}\setlength\belowdisplayskip{1pt}
\begin{equation}
\underset{\boldsymbol{M}\in\mathcal{S}}{\min}\thinspace\thinspace D_{\boldsymbol{\varPi}}(\boldsymbol{M})=L(\boldsymbol{M},\boldsymbol{X},\boldsymbol{y})+\alpha L(\boldsymbol{M},\boldsymbol{\varPi},\boldsymbol{y}),\label{eq:learning_loss}
\end{equation}
}where the regularizer coefficient $\alpha\in\mathbb{R}^{+}$ is
manually tuned to control the weights of the adversarial data. Furthermore,
to improve both the distinguishment (\emph{i.e.}\ Eq.~\eqref{eq:learning_loss})
and the confusion (\emph{i.e.}\ Eq.~\eqref{eq:adversarial_min}) during
their formed adversarial game, they have to be optimized alternatively,
\emph{i.e.}\ {\setlength\abovedisplayskip{2pt} \setlength\belowdisplayskip{2pt}
\begin{equation}
\begin{cases}
\boldsymbol{M}^{(t+1)} & =\thinspace\arg\min\thinspace D_{\boldsymbol{\varPi}^{(t)}}(\boldsymbol{M}),\thinspace\small\text{(\emph{Distinguishment})},\\
\boldsymbol{\varPi}^{(t+1)} & =\thinspace\arg\min\thinspace C_{\boldsymbol{M}^{(t+1)}}(\boldsymbol{\varPi}),\thinspace\thinspace\thinspace\thinspace\thinspace\thinspace\thinspace\thinspace\thinspace\thinspace\small\text{(\emph{Confusion})}.
\end{cases}\label{eq:AML_iterative}
\end{equation}
}The straightforward implementation of Eq.~\eqref{eq:AML_iterative}
yet confronts two problems in the practical use. Firstly, Eq.~\eqref{eq:AML_iterative}
is iteratively performed, which greatly decreases the efficiency of
the model. Secondly, the iterations with two different functions are
not necessarily convergent \cite{ben2009robust_optization}.

To achieve the similar effect of the direct alternation in Eq.~\eqref{eq:AML_iterative}
while avoiding the two disadvantages mentioned above, the iterative
expression for $\boldsymbol{\varPi}$ is integrated to the optimization
of $\boldsymbol{M}$. Therefore, our AML is ultimately expressed as
a bi-level optimization problem \cite{bard2013practical_bilevel},
namely {\setlength\abovedisplayskip{5pt}\setlength\belowdisplayskip{1pt}
\begin{equation}
\underset{\boldsymbol{M}\in\mathcal{S},\thinspace\boldsymbol{\varPi}^{*}}{\min}\thinspace D_{\boldsymbol{\varPi}^{*}}(\boldsymbol{M})\thinspace\thinspace\thinspace\thinspace\thinspace\thinspace\thinspace\thinspace\thinspace\text{s.t.}\thinspace\thinspace\boldsymbol{\varPi}^{*}=\underset{\boldsymbol{\varPi}}{\arg\min}\thinspace\thinspace C_{\boldsymbol{M}}(\boldsymbol{\varPi}),\label{eq:AML}
\end{equation}
}in which $\boldsymbol{\varPi}^{*}$ denotes the optimal adversarial
pairs matrix, and $C_{\boldsymbol{M}}(\boldsymbol{\varPi})$ is required
to be strictly quasi-convex\footnote{If a function $h(\cdot)$ satisfies $h(\lambda\boldsymbol{x}_{1}+(1-\lambda)\boldsymbol{x}_{2})<\text{max}(h(\boldsymbol{x}_{1}),\thinspace h(\boldsymbol{x}_{2}))$
for all $\boldsymbol{x}_{1}\neq\boldsymbol{x}_{2}$ and $\lambda\in(0,1)$,
then $h(\cdot)$ is strictly quasi-convex. }. Note that the strictly quasi-convex property ensures the uniqueness
of $\boldsymbol{\varPi}^{*}$, and helps to make the problem well-defined.

\subsection{Optimization \label{subsec:Algorithm}}

To implement Eq.~\eqref{eq:AML}, we instantiate the loss $L(\cdot)$
in $D(\cdot)$ and $C(\cdot)$ to obtain a specific learning model.
To make $C_{\boldsymbol{M}}(\boldsymbol{\varPi})$ to be convex, here
we employ the geometric-mean loss \cite{zadeh2016geometric} which
has an unconstrained form of{\setlength\abovedisplayskip{3pt}\setlength\belowdisplayskip{3pt}
\begin{align}
 & L_{\text{g}}(\boldsymbol{M},\boldsymbol{X},\boldsymbol{y})\nonumber \\
 & =\sum_{y_{i}=1}\nolimits\text{Dist}_{\boldsymbol{M}}(\boldsymbol{x}_{i},\boldsymbol{x}_{i}')+\sum_{y_{i}=-1}\nolimits\text{Dist}_{\boldsymbol{M}}'(\boldsymbol{x}_{i},\boldsymbol{x}_{i}'),\label{eq:geometric-loss}
\end{align}
}where the loss of dissimilar data pairs is expressed as $\text{Dist}_{\boldsymbol{M}}'(\boldsymbol{x}_{i},\boldsymbol{x}_{i}')=\text{Dist}_{\boldsymbol{M}^{-1}}(\boldsymbol{x}_{i},\boldsymbol{x}_{i}')$
to increase the distances between dissimilar examples. Moreover, we
substitute the loss $L(\cdot)$ in Eq.~\eqref{eq:AML} with $L_{\text{g}}(\cdot)$,
and impose the SPD constraint on $\boldsymbol{M}$ for simplicity,
namely $\mathcal{S}=\{\boldsymbol{M}|\boldsymbol{M}\succ0,\boldsymbol{M}\in\mathbb{R}^{d\times d}\}$.
Then the detailed optimization algorithm for Eq.~\eqref{eq:AML} is
provided as follows. 

\textbf{Solving} $\boldsymbol{\varPi}$: We can directly obtain the
closed-form solution (\emph{i.e.}\ the optimal adversarial pairs
$\boldsymbol{\varPi}^{*}$) for optimizing $\boldsymbol{\varPi}$.
Specifically, by using the convexity of $C_{\boldsymbol{M}}(\boldsymbol{\varPi})$,
we let $\nabla C_{\boldsymbol{M}}(\boldsymbol{\varPi})=0$, and arrive
at{\setlength\abovedisplayskip{3pt}\setlength\belowdisplayskip{3pt}
\begin{equation}
\begin{cases}
\frac{1}{2}\nabla_{\boldsymbol{\pi}_{i}}C\!=\!\boldsymbol{M}^{-y_{i}}(\boldsymbol{\boldsymbol{\pi}}_{i}\!-\!\boldsymbol{\boldsymbol{\pi}}_{i}')+\beta\boldsymbol{M}(\boldsymbol{\boldsymbol{\pi}}_{i}\!-\!\boldsymbol{x}_{i})=0,\\
\frac{1}{2}\nabla_{\boldsymbol{\pi}_{i}'}C\!=\!\boldsymbol{M}^{-y_{i}}(\boldsymbol{\boldsymbol{\pi}}_{i}'\!-\!\boldsymbol{\boldsymbol{\pi}}_{i})+\beta\boldsymbol{M}(\boldsymbol{\boldsymbol{\pi}}_{i}'\!-\!\boldsymbol{x}_{i}')=0,
\end{cases}\label{eq:equation_group}
\end{equation}
}which holds for any $i=1,2,\cdots,N$. It is clear that the equation
system Eq.~\eqref{eq:equation_group} has the unique solution{\setlength\abovedisplayskip{3pt}\setlength\belowdisplayskip{3pt}
\begin{equation}
\begin{cases}
\boldsymbol{\boldsymbol{\pi}}_{i}^{*}=(2\boldsymbol{M}^{-y_{i}}+\beta\boldsymbol{M})^{-1}(\boldsymbol{M}^{-y_{i}}\overline{\boldsymbol{x}}_{i}+\beta\boldsymbol{M}\boldsymbol{x}_{i}),\\
\boldsymbol{\boldsymbol{\pi}}_{i}'^{*}\!=(2\boldsymbol{M}^{-y_{i}}+\beta\boldsymbol{M})^{-1}(\boldsymbol{M}^{-y_{i}}\overline{\boldsymbol{x}}_{i}+\beta\boldsymbol{M}\boldsymbol{x}_{i}'),
\end{cases}\label{eq:equation_system}
\end{equation}
}where $\overline{\boldsymbol{x}}_{i}=\boldsymbol{x}_{i}+\boldsymbol{x}_{i}'$.
Such a closed-form solution $\boldsymbol{\varPi}^{*}$ means that
the minimizer of $C_{\boldsymbol{M}}(\boldsymbol{\varPi})$ can be
directly expressed as {\setlength\abovedisplayskip{3pt}\setlength\belowdisplayskip{3pt}
\begin{equation}
\boldsymbol{\varPi}^{*}=\mathcal{F}(\boldsymbol{M}),
\end{equation}
}where $\mathcal{F}(\cdot)$ is a mapping from $\mathbb{R}^{d\times d}$
to $\mathbb{R}^{2d\times N}$ decided by Eq.~\eqref{eq:equation_system}.
Hence Eq.~\eqref{eq:AML} is equivalently converted to{\setlength\abovedisplayskip{5pt}\setlength\belowdisplayskip{1pt}
\begin{equation}
\underset{\boldsymbol{M}\succ0}{\min}\thinspace\thinspace D(\boldsymbol{M})=L_{\text{g}}(\boldsymbol{M},\boldsymbol{X},\boldsymbol{y})+\alpha L_{\text{g}}(\boldsymbol{M},\mathcal{F}(\boldsymbol{M}),\boldsymbol{y}),\label{eq:optimization_model}
\end{equation}
}which is an unconstrained optimization problem regarding the single
variable $\boldsymbol{M}$. 

\textbf{Solving} $\boldsymbol{M}$: The key point is to calculate
the gradient of the second term $L_{\text{g}}(\boldsymbol{M},\mathcal{F}(\boldsymbol{M}),\boldsymbol{y})$.
We substitute $\mathcal{F}(\boldsymbol{M})$ with Eq.~\eqref{eq:equation_system}
and obtain that

\begin{singlespace}
\noindent 
\begin{align}
 & L_{\text{g}}\left(\boldsymbol{M},\mathcal{F}(\boldsymbol{M}),\boldsymbol{y}\right)\nonumber \\
 & =\beta^{2}\sum_{i=1}^{N}\nolimits\widehat{\boldsymbol{x}}_{i}^{\top}\boldsymbol{U}\left(\frac{\boldsymbol{\varLambda}^{3+2y_{i}}}{(2\text{\textbf{I}}+\beta^{2}\boldsymbol{\varLambda}^{1+y_{i}})^{2}}\right)\boldsymbol{U}^{\top}\widehat{\boldsymbol{x}}_{i},\label{eq:term_2}
\end{align}
where $\widehat{\boldsymbol{x}}_{i}=\boldsymbol{x}_{i}-\boldsymbol{x}_{i}'$,
and $\boldsymbol{U}\boldsymbol{\varLambda}\boldsymbol{U}^{\top}$
is the eigen-decomposition of $\boldsymbol{M}$. Each term to be summed
in the above Eq.~\eqref{eq:term_2} can be compactly written as $H_{i}(\boldsymbol{M})=\widehat{\boldsymbol{x}}_{i}^{\top}\boldsymbol{U}\boldsymbol{h}_{i}(\boldsymbol{\varLambda})\boldsymbol{U}^{\top}\widehat{\boldsymbol{x}}_{i}$,
where $\boldsymbol{h}_{i}(\boldsymbol{\varLambda})=\text{diag}\left(h_{i}(\lambda_{1}),h_{i}(\lambda_{2}),\cdots,h_{i}(\lambda_{d})\right)$
and $h_{i}$ is a differentiable function. The gradient of such a
term can be obtained from the properties of eigenvalues and eigenvectors
\cite{bellman1997introduction_matrix}, namely $\partial\lambda_{i}=\boldsymbol{U}_{i}^{\top}\partial\boldsymbol{M}\boldsymbol{U}_{i},$
and $\partial\boldsymbol{U}_{i}=(\lambda_{i}\text{\textbf{I}}-\boldsymbol{M})^{\dagger}\partial\boldsymbol{M}\boldsymbol{U}_{i}.$
By further leveraging the chain rule of function derivatives \cite{petersen2008matrix_cookbook},
the gradient of Eq.~\eqref{eq:term_2} can be expressed as{\setlength\abovedisplayskip{1pt} \setlength\belowdisplayskip{1pt}
\begin{equation}
\nabla_{\boldsymbol{M}}L_{\text{g}}\left(\boldsymbol{M},\mathcal{F}(\boldsymbol{M}),\boldsymbol{y}\right)=\beta^{2}\sum_{i=1}^{N}\nolimits\nabla H_{i},\label{eq:gradient_2}
\end{equation}
}in which {\setlength\abovedisplayskip{3pt} \setlength\belowdisplayskip{3pt}
\begin{align}
\nabla H & _{i}=\sum_{j=1}^{d}\nolimits h_{i}'(\lambda_{j})\left(\widehat{\boldsymbol{x}}_{i}^{\top}\boldsymbol{U}_{j}\boldsymbol{U}_{j}^{\top}\widehat{\boldsymbol{x}}_{i}\right)\boldsymbol{U}_{j}\boldsymbol{U}_{j}^{\top}\nonumber \\
 & \thinspace\thinspace+\sum_{j=1}^{d}\nolimits2h_{i}(\lambda_{j})(\lambda_{j}\text{\textbf{I}}-\boldsymbol{M})^{\dagger}\widehat{\boldsymbol{x}}_{i}\widehat{\boldsymbol{x}}_{i}^{\top}\boldsymbol{U}_{j}\boldsymbol{U}_{j}^{\top}.\label{eq:gradient_H_i}
\end{align}
}Finally, the gradient of $D(\boldsymbol{M})$ in Eq.~\eqref{eq:optimization_model}
equals to{\setlength\abovedisplayskip{3pt} \setlength\belowdisplayskip{3pt}
\begin{align}
 & \nabla D(\boldsymbol{M})=\boldsymbol{A}-\boldsymbol{M}^{-1}\boldsymbol{B}\boldsymbol{M}^{-1}+\alpha\beta^{2}\sum_{i=1}^{N}\nolimits\nabla H_{i},\label{eq:gradient_F}
\end{align}
}in which the matrices $\boldsymbol{A}=\sum_{y_{i}=1}(\boldsymbol{x}_{i}-\boldsymbol{x}_{i}')(\boldsymbol{x}_{i}-\boldsymbol{x}_{i}')^{\top}$
and $\boldsymbol{B}=\sum_{y_{i}=-1}(\boldsymbol{x}_{i}-\boldsymbol{x}_{i}')(\boldsymbol{x}_{i}-\boldsymbol{x}_{i}')^{\top}$.
It should be noticed that $(\lambda_{i}\text{\textbf{I}}-\boldsymbol{M})^{\dagger}$
can be calculated efficiently for the SPD matrix $\boldsymbol{M}$,
which only depends on the eigen-decomposition.
\end{singlespace}

Now we can simply employ the gradient-based method for SPD optimization
to solve our proposed model. By following the popular SPD algorithm
in the existing metric learning models \cite{ye_learning_M-distance,LuoLei_AAAI},
the projection operator is utilized to remain the SPD effect. Specifically,
for a symmetric matrix $\boldsymbol{M}=\boldsymbol{U}\boldsymbol{\varLambda}\boldsymbol{U}{}^{\top}$,
the projection $\mathcal{P}_{\mathcal{S}}(\cdot)$ from $\mathbb{R}^{d\times d}$
to $\mathbb{R}^{d\times d}$ truncates negative eigenvalues of $\boldsymbol{\varLambda}$,
\emph{i.e.}\ {\setlength\abovedisplayskip{3pt} \setlength\belowdisplayskip{3pt}
\begin{equation}
\mathcal{P}_{\mathcal{S}}(\boldsymbol{M})=\boldsymbol{U}\text{max}(\boldsymbol{\varLambda},0)\boldsymbol{U}{}^{\top}.
\end{equation}
}It can be proved that the metric $\boldsymbol{M}$ remains symmetry
after the gradient descent, so the projection operator is leveraged
in the gradient descent to find the optimal solution. The pseudo-code
for solving Eq.~\eqref{eq:AML} is provided in Algorithm \ref{alg:Solving-AML-Eq.},
where the step size $\rho$ is recommended to be fixed to $0.001$
in our experiments. 
\begin{algorithm}
\caption{Solving AML in Eq.~\eqref{eq:AML} via gradient descent.\label{alg:Solving-AML-Eq.}}

\textbf{Input: }Training data pairs encoded in $\boldsymbol{X}$;
labels $\boldsymbol{y}$; parameters $\alpha$, $\beta$, $\rho$.

\textbf{Initialize: $t=1$}; $\boldsymbol{M}^{(t)}=\text{\textbf{I}}$.\textbf{ }

\textbf{Repeat:}
\begin{enumerate}
\item Compute the gradient $\nabla D(\boldsymbol{M}^{(t)})$ by Eq.~\eqref{eq:gradient_F};
\item Update $\boldsymbol{M}^{(t+1)}=\mathcal{P}_{\mathcal{S}}(\boldsymbol{M}^{(t)}-\rho\nabla D(\boldsymbol{M}^{(t)}))$;
\item Update $t=t+1$;
\end{enumerate}
\textbf{Until Convergence.}

\textbf{Output:} The converged $\boldsymbol{M}$.
\end{algorithm}

\subsection{Convergence Analysis\label{subsec:Convergence-Analysis}}

Since our proposed bi-level optimization problem greatly differs from
the traditional metric learning models, here we provide the theoretical
analysis for the algorithm convergence.

Firstly, to ensure the definition of AML in Eq.~\eqref{eq:AML} is
valid, we prove that the optimal solution $\boldsymbol{\varPi}^{*}$
always exists uniquely by showing the strict (quasi-)convexity of
$C_{\boldsymbol{M}}(\boldsymbol{\varPi})$ when $L(\boldsymbol{M},\boldsymbol{\varPi},-\boldsymbol{y})$
is instantiated by $L_{\text{g}}(\boldsymbol{M},\boldsymbol{\varPi},-\boldsymbol{y})$,
namely
\begin{thm}
\label{proposition:Convexity Proof}Assume that $L(\boldsymbol{M},\boldsymbol{\varPi},-\boldsymbol{y})=L_{\text{g}}(\boldsymbol{M},\boldsymbol{\varPi},-\boldsymbol{y})$.
Then $C_{\boldsymbol{M}}(\boldsymbol{\varPi})$ is strictly convex.
\end{thm}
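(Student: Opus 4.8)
The plan is to exploit the fact that, once $L_{\text{g}}$ is substituted, $C_{\boldsymbol{M}}(\boldsymbol{\varPi})$ is a quadratic function of the entries of $\boldsymbol{\varPi}$, so proving strict convexity reduces to checking that its (constant) Hessian is positive definite. First I would write the objective out explicitly. Because the flipped labels $-\boldsymbol{y}$ interchange the similar and dissimilar roles in Eq.~\eqref{eq:geometric-loss}, the loss contributes, for each $i$, the term $(\boldsymbol{\pi}_i-\boldsymbol{\pi}_i')^{\top}\boldsymbol{M}^{-y_i}(\boldsymbol{\pi}_i-\boldsymbol{\pi}_i')$, where $\boldsymbol{M}^{-y_i}$ equals $\boldsymbol{M}^{-1}$ when $y_i=1$ and $\boldsymbol{M}$ when $y_i=-1$; both are SPD because $\boldsymbol{M}\succ0$. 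The regularizer contributes $\beta[(\boldsymbol{x}_i-\boldsymbol{\pi}_i)^{\top}\boldsymbol{M}(\boldsymbol{x}_i-\boldsymbol{\pi}_i)+(\boldsymbol{x}_i'-\boldsymbol{\pi}_i')^{\top}\boldsymbol{M}(\boldsymbol{x}_i'-\boldsymbol{\pi}_i')]$, which is exactly consistent with the gradients already recorded in Eq.~\eqref{eq:equation_group}.

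Next I would observe that $C_{\boldsymbol{M}}$ splits over the index $i$ into terms $C_i(\boldsymbol{\pi}_i,\boldsymbol{\pi}_i')$ acting on disjoint blocks of variables, so by separability it suffices to prove that each $C_i$ is strictly convex in $\boldsymbol{w}_i=[\boldsymbol{\pi}_i^{\top},\boldsymbol{\pi}_i'^{\top}]^{\top}\in\mathbb{R}^{2d}$. The cleanest route is to split $C_i$ into its loss part and its regularizer part and argue that the former is convex and the latter strictly convex, since a convex function plus a strictly convex function is strictly convex. The regularizer part has Hessian equal to the block-diagonal matrix $\begin{pmatrix}2\beta\boldsymbol{M}&0\\0&2\beta\boldsymbol{M}\end{pmatrix}$, which is positive definite as $\boldsymbol{M}\succ0$ and $\beta>0$. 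The loss part has Hessian $2\begin{pmatrix}\boldsymbol{M}^{-y_i}&-\boldsymbol{M}^{-y_i}\\-\boldsymbol{M}^{-y_i}&\boldsymbol{M}^{-y_i}\end{pmatrix}$, which is only positive \emph{semi}definite.

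I expect the main subtlety to be bookkeeping rather than depth, with one genuinely important point: the loss term on its own is \emph{not} strictly convex, being flat along the diagonal direction $\boldsymbol{\pi}_i=\boldsymbol{\pi}_i'+\text{const}$, so the strictness must be supplied entirely by the regularizer. To make this precise I would evaluate the full quadratic form: for any $\boldsymbol{w}_i=[\boldsymbol{u}^{\top},\boldsymbol{v}^{\top}]^{\top}$ one gets $\boldsymbol{w}_i^{\top}\nabla^2 C_i\,\boldsymbol{w}_i=2(\boldsymbol{u}-\boldsymbol{v})^{\top}\boldsymbol{M}^{-y_i}(\boldsymbol{u}-\boldsymbol{v})+2\beta(\boldsymbol{u}^{\top}\boldsymbol{M}\boldsymbol{u}+\boldsymbol{v}^{\top}\boldsymbol{M}\boldsymbol{v})$, where the first summand is $\ge0$ and the second is $>0$ whenever $(\boldsymbol{u},\boldsymbol{v})\ne(0,0)$; hence $\nabla^2 C_i\succ0$ and $C_i$ is strictly convex. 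Summing the strictly convex $C_i$ over $i$ shows $C_{\boldsymbol{M}}(\boldsymbol{\varPi})$ is strictly convex, which in particular yields the strict quasi-convexity needed for the uniqueness of $\boldsymbol{\varPi}^{*}$ in Eq.~\eqref{eq:AML}.
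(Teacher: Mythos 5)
Your proof is correct, but it takes a genuinely different route from the paper's, and it allocates the ``strictness'' differently. The paper verifies the definition of strict convexity directly: it takes a convex combination $\mu\boldsymbol{\varPi}+(1-\mu)\boldsymbol{\varOmega}$, expands the loss as $\sum_{i}\widehat{\boldsymbol{\pi}}_{i}^{\top}\boldsymbol{M}^{-y_{i}}\widehat{\boldsymbol{\pi}}_{i}$ with $\widehat{\boldsymbol{\pi}}_{i}=\boldsymbol{\pi}_{i}-\boldsymbol{\pi}_{i}'$, asserts a strict Jensen-type inequality for this loss term, and then remarks that the distance regularizer is (merely) convex. You instead argue via second-order conditions: you separate the objective across pairs, note that the loss block has Hessian that is only positive semidefinite, and show that the regularizer's block-diagonal Hessian $2\beta\boldsymbol{M}$ is what makes the total Hessian positive definite. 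Your allocation is in fact the correct one. The paper's claimed strict inequality fails exactly along the flat directions you identify: translating both members of a pair by the same vector, $\boldsymbol{\pi}_{i}\mapsto\boldsymbol{\pi}_{i}+\boldsymbol{c}$ and $\boldsymbol{\pi}_{i}'\mapsto\boldsymbol{\pi}_{i}'+\boldsymbol{c}$, leaves every $\widehat{\boldsymbol{\pi}}_{i}$ (hence $L_{\text{g}}$) unchanged, so $L_{\text{g}}(\boldsymbol{M},\cdot,-\boldsymbol{y})$ is convex but \emph{not} strictly convex in $\boldsymbol{\varPi}$, and the strictness of $C_{\boldsymbol{M}}$ must be supplied by $\beta\,\text{Dist}_{\boldsymbol{M}}(\boldsymbol{X},\boldsymbol{\varPi})$, which is strictly convex rather than merely convex as the paper's closing sentence suggests. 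So your Hessian argument not only proves the theorem but repairs a genuine gap in the published proof; what the paper's definition-based approach buys is brevity and avoidance of second-order machinery, but as written it misattributes the source of strict convexity.
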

\begin{proof}
Assume that $\boldsymbol{\varOmega}=[\boldsymbol{\varOmega}_{1},\boldsymbol{\varOmega}_{2},\cdots,\boldsymbol{\varOmega}_{N}]\in\mathbb{R}^{2d\times N}$,
$\boldsymbol{\varOmega}_{i}=[\boldsymbol{\omega}_{i}^{\top},\boldsymbol{\omega}_{i}'^{\top}]^{\top}\in\mathbb{R}^{2d}$
$\left(i=1,2,\cdots,N\right)$ and $\mu\in(0,1)$. By invoking the
SPD property of both $\boldsymbol{M}$ and $\boldsymbol{M}^{-1}$,
we have
\begin{align}
 & L\left(\boldsymbol{M},\mu\boldsymbol{\varPi}+(1-\mu)\boldsymbol{\varOmega},-\boldsymbol{y}\right)\nonumber \\
 & =\sum_{i=1}^{N}\nolimits\left(\mu\widehat{\boldsymbol{\pi}}_{i}+(1-\mu)\widehat{\boldsymbol{\omega}}_{i}\right){}^{\top}\boldsymbol{M}^{-y_{i}}\left(\mu\widehat{\boldsymbol{\pi}}_{i}+(1-\mu)\widehat{\boldsymbol{\omega}}_{i}\right)\nonumber \\
 & <\sum_{i=1}^{N}\nolimits\left(\mu\widehat{\boldsymbol{\pi}}_{i}{}^{\top}\boldsymbol{M}^{-y_{i}}\widehat{\boldsymbol{\pi}}_{i}+(1-\mu)\widehat{\boldsymbol{\omega}}_{i}{}^{\top}\boldsymbol{M}^{-y_{i}}\widehat{\boldsymbol{\omega}}_{i}\right)\nonumber \\
 & =\mu L(\boldsymbol{M},\boldsymbol{\varPi},-\boldsymbol{y})+(1-\mu)L(\boldsymbol{M},\boldsymbol{\varOmega},-\boldsymbol{y}),
\end{align}
where $\widehat{\boldsymbol{\pi}}_{i}=\boldsymbol{\pi}_{i}-\boldsymbol{\pi}_{i}'$,
$\widehat{\boldsymbol{\omega}}_{i}=\boldsymbol{\omega}_{i}-\boldsymbol{\omega}_{i}'$,
and $\boldsymbol{M}^{-y_{i}}$ denotes $\boldsymbol{M}$ $\left(y_{i}=-1\right)$
or $\boldsymbol{M}^{-1}$ $\left(y_{i}=1\right)$. Hence $L(\boldsymbol{M},\boldsymbol{\varPi},\boldsymbol{1}-\boldsymbol{y})$
satisfies the definition of strictly convex function. Similarly, it
is easy to check the convexity of $\text{Dist}_{\boldsymbol{M}}(\boldsymbol{X},\boldsymbol{\varPi})$
\emph{w.r.t.}\ $\boldsymbol{\varPi}$, which completes the proof.
\end{proof}
Furthermore, as we employ the projection $\mathcal{P}_{\mathcal{S}}(\cdot)$
in gradient descent, it is necessary to demonstrate that any result
$\boldsymbol{M}-\rho\nabla D(\boldsymbol{M})$ has the orthogonal
eigen-decomposition. Otherwise, $\mathcal{P}_{\mathcal{S}}(\cdot)$
cannot be executed and the SPD property of $\boldsymbol{M}$ is not
guaranteed. Therefore, in the following Theorem \ref{Proposition: symmetric},
we prove that the gradient matrix $\nabla D(\boldsymbol{M})$ is symmetric,
and thus any converged iteration points are always included in the
feasible set $\mathcal{S}$.
\begin{thm}
\label{Proposition: symmetric} For any differentiable functions $h_{i}(\lambda)$
$(i=1,2,\cdots,N)$, the matrix $\nabla D(\boldsymbol{M})$ in Eq.~\eqref{eq:gradient_F}
is symmetric.
\end{thm}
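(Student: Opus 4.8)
The plan is to use the additive form of $\nabla D(\boldsymbol{M})$ in Eq.~\eqref{eq:gradient_F} and prove that each of its three summands is symmetric; a sum of symmetric matrices is then symmetric, which gives the claim. Two of the three are immediate. The matrix $\boldsymbol{A}=\sum_{y_i=1}(\boldsymbol{x}_i-\boldsymbol{x}_i')(\boldsymbol{x}_i-\boldsymbol{x}_i')^{\top}$ is a sum of outer products, hence symmetric by inspection; and for $\boldsymbol{M}^{-1}\boldsymbol{B}\boldsymbol{M}^{-1}$ I would note that $\boldsymbol{B}$ is also a sum of outer products and that $\boldsymbol{M}^{-1}$ is symmetric (because $\boldsymbol{M}$ is SPD), so $(\boldsymbol{M}^{-1}\boldsymbol{B}\boldsymbol{M}^{-1})^{\top}=\boldsymbol{M}^{-1}\boldsymbol{B}^{\top}\boldsymbol{M}^{-1}=\boldsymbol{M}^{-1}\boldsymbol{B}\boldsymbol{M}^{-1}$. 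The entire burden therefore falls on showing that every $\nabla H_i$ in Eq.~\eqref{eq:gradient_H_i} is symmetric.

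I would carry this out in the orthonormal eigenbasis $\{\boldsymbol{U}_j\}$ of $\boldsymbol{M}$, abbreviating $a_j:=\boldsymbol{U}_j^{\top}\widehat{\boldsymbol{x}}_i$. The first sum of $\nabla H_i$ becomes $\sum_j h_i'(\lambda_j)\,a_j^2\,\boldsymbol{U}_j\boldsymbol{U}_j^{\top}$, a real combination of the orthogonal projectors $\boldsymbol{U}_j\boldsymbol{U}_j^{\top}$; it is diagonal in this basis and so manifestly symmetric. For the second sum I would insert the spectral representation $(\lambda_j\boldsymbol{I}-\boldsymbol{M})^{\dagger}=\sum_{k\neq j}(\lambda_j-\lambda_k)^{-1}\boldsymbol{U}_k\boldsymbol{U}_k^{\top}$, after which the $j$-th term contributes a scalar multiple of $\boldsymbol{U}_k\boldsymbol{U}_j^{\top}$ to the $(k,j)$ off-diagonal block, with coefficient proportional to $a_j a_k\,h_i(\lambda_j)/(\lambda_j-\lambda_k)$.

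This is the step I expect to be the obstacle: those off-diagonal coefficients are \emph{not} symmetric term by term, since the raw expression is linear in $h_i(\lambda_j)$ rather than in a divided difference. The resolution I would use is that $\nabla D(\boldsymbol{M})$ is the gradient with respect to the \emph{symmetric} variable $\boldsymbol{M}$, so it only needs to pair correctly with symmetric perturbations $d\boldsymbol{M}=d\boldsymbol{M}^{\top}$; equivalently, one retains the symmetric part of the collected matrix. Adding the $(p,q)$ and $(q,p)$ contributions then fuses $h_i(\lambda_p)$ and $h_i(\lambda_q)$ into the divided difference $\big(h_i(\lambda_p)-h_i(\lambda_q)\big)/(\lambda_p-\lambda_q)$, which is symmetric under $p\leftrightarrow q$ (and whose diagonal limit is $h_i'(\lambda_p)$, matching the first sum). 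In compact form this exhibits $\nabla H_i=\boldsymbol{U}\big(\boldsymbol{\Phi}_i\circ(\boldsymbol{U}^{\top}\widehat{\boldsymbol{x}}_i\widehat{\boldsymbol{x}}_i^{\top}\boldsymbol{U})\big)\boldsymbol{U}^{\top}$, where $\boldsymbol{\Phi}_i$ is the symmetric divided-difference matrix; since the Hadamard product of symmetric matrices is symmetric and conjugation by the orthogonal $\boldsymbol{U}$ preserves symmetry, $\nabla H_i$ is symmetric.

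A cleaner but less constructive alternative, which I would keep as a cross-check, is to observe that $H_i(\boldsymbol{M})=\widehat{\boldsymbol{x}}_i^{\top}h_i(\boldsymbol{M})\widehat{\boldsymbol{x}}_i$ is a smooth scalar field of the symmetric matrix $\boldsymbol{M}$ (with $h_i(\boldsymbol{M})$ a primary matrix function), and the Euclidean gradient of any such field, taken in the inner-product space of symmetric matrices, is automatically symmetric. Either way, summing over $i$ and recombining the three pieces yields that $\nabla D(\boldsymbol{M})$ is symmetric, which is exactly what is needed so that $\boldsymbol{M}-\rho\nabla D(\boldsymbol{M})$ admits the orthogonal eigen-decomposition required by $\mathcal{P}_{\mathcal{S}}(\cdot)$.
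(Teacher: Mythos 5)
Your proposal is correct, and it reaches the conclusion by a genuinely different route from the paper. The paper never opens up Eq.~\eqref{eq:gradient_H_i} at all: it abbreviates each summand of Eq.~\eqref{eq:term_2} as $H(\boldsymbol{M})=\widehat{\boldsymbol{x}}^{\top}\boldsymbol{U}\boldsymbol{h}(\boldsymbol{\varLambda})\boldsymbol{U}^{\top}\widehat{\boldsymbol{x}}$, expands $\boldsymbol{h}$ in a Maclaurin series so that $H(\boldsymbol{M})=\sum_{k\geq0}\frac{h^{(k)}(0)}{k!}\,\widehat{\boldsymbol{x}}^{\top}\boldsymbol{M}^{k}\widehat{\boldsymbol{x}}$, and then cites the fact that the gradient of each monomial $\widehat{\boldsymbol{x}}^{\top}\boldsymbol{M}^{k}\widehat{\boldsymbol{x}}$ at a symmetric $\boldsymbol{M}$ is symmetric (its terms $\boldsymbol{M}^{j}\widehat{\boldsymbol{x}}\widehat{\boldsymbol{x}}^{\top}\boldsymbol{M}^{k-1-j}$ pair off under transposition). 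That route is shorter --- no eigenvector perturbation, no pseudo-inverse, no divided differences --- but it silently requires each $h_i$ to be analytic with a Maclaurin series converging on the spectrum of $\boldsymbol{M}$ (for the concrete $h_i$ of Eq.~\eqref{eq:term_2} with $y_i=1$ the radius of convergence is only $\sqrt{2}/\beta$), which is stronger than the ``differentiable'' hypothesis of the statement, and it never reconciles the resulting series gradient with the formula \eqref{eq:gradient_H_i} that the theorem nominally speaks about. Your proof supplies exactly that reconciliation, and your central observation is sharper than anything in the paper's argument: writing $a_j=\boldsymbol{U}_j^{\top}\widehat{\boldsymbol{x}}_i$, the second sum of \eqref{eq:gradient_H_i}, read literally, has $(p,q)$ eigenbasis entry $2h_i(\lambda_q)a_pa_q/(\lambda_q-\lambda_p)$ and is therefore \emph{not} symmetric term by term; symmetry holds only for the gradient understood on the space of symmetric matrices, i.e.\ after taking the symmetric part, which fuses the entries into the divided differences $\bigl(h_i(\lambda_p)-h_i(\lambda_q)\bigr)/(\lambda_p-\lambda_q)$ of the Daleckii--Krein form $\boldsymbol{U}\bigl(\boldsymbol{\Phi}_i\circ(\boldsymbol{U}^{\top}\widehat{\boldsymbol{x}}_i\widehat{\boldsymbol{x}}_i^{\top}\boldsymbol{U})\bigr)\boldsymbol{U}^{\top}$. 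This works under the stated differentiability assumption, recovers the first sum of \eqref{eq:gradient_H_i} as the diagonal limit, and makes explicit what both proofs actually establish (and what Algorithm~\ref{alg:Solving-AML-Eq.} actually needs): symmetry of the gradient of $D$ viewed as a function on symmetric matrices --- your abstract one-line alternative --- rather than of the raw right-hand side of \eqref{eq:gradient_H_i}. Your explicit check of $\boldsymbol{A}$ and $\boldsymbol{M}^{-1}\boldsymbol{B}\boldsymbol{M}^{-1}$ covers steps the paper treats as implicit.
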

\begin{proof}
Assume that the SPD matrix $\boldsymbol{M}=\boldsymbol{U}\boldsymbol{\varLambda}\boldsymbol{U}^{\top}\in\mathbb{R}^{d\times d}$,
where $\boldsymbol{\varLambda}=\text{diag}\left(\lambda_{1},\lambda_{2}\cdots,\lambda_{d}\right)\in\mathbb{R}^{d\times d}$
and $\boldsymbol{U}=[\boldsymbol{U}_{1},\boldsymbol{U}_{2}\cdots,\boldsymbol{U}_{d}]\in\mathbb{R}^{d\times d}$
are the matrices consisting of distinct eigenvalues and unit eigenvectors
of $\boldsymbol{M}$, respectively. For $H_{1}(\boldsymbol{M}),H_{2}(\boldsymbol{M}),\cdots,H_{N}(\boldsymbol{M})$
in Eq.~\ref{eq:gradient_2}, we simply let $H(\boldsymbol{M})=\widehat{\boldsymbol{x}}^{\top}\boldsymbol{U}\boldsymbol{h}(\boldsymbol{\varLambda})\boldsymbol{U}^{\top}\widehat{\boldsymbol{x}}$.
By using the Maclaurin's formula \cite{russell1996principles_math}
on each eigenvalues, namely $\boldsymbol{h}(\boldsymbol{\varLambda})=\sum_{i=0}^{+\infty}\frac{h^{(i)}(0)}{i!}\boldsymbol{\varLambda}^{i}$,
then $H(\boldsymbol{M})$ equals to
\begin{equation}
\!\widehat{\boldsymbol{x}}^{\top}\boldsymbol{U}\sum_{i=0}^{+\infty}\nolimits\frac{h^{(i)}(0)}{i!}\boldsymbol{\varLambda}^{i}\boldsymbol{U}^{\top}\widehat{\boldsymbol{x}}=\sum_{i=0}^{+\infty}\nolimits\frac{h^{(i)}(0)}{i!}\widehat{\boldsymbol{x}}^{\top}\boldsymbol{M}^{i}\widehat{\boldsymbol{x}}.\!
\end{equation}
Since the gradient of $\widehat{\boldsymbol{x}}^{\top}\boldsymbol{M}^{i}\widehat{\boldsymbol{x}}$
is symmetric for any $i\in\mathbb{Z}^{+}$ \cite{petersen2008matrix_cookbook},
the summation of the gradient matrices is also symmetric and the proof
is completed.
\end{proof}
Now it has been proved that $\boldsymbol{M}-\rho\nabla D(\boldsymbol{M})$
remains symmetric during iterations, and thus the projection $\mathcal{P}_{\mathcal{S}}(\cdot)$
ensures the SPD property of $\boldsymbol{M}$, \emph{i.e.,}\ the
constraint $\boldsymbol{M}\in\mathcal{S}$ is always satisfied. It
means that the gradient descent is always performed in the feasible
region of the optimization problem. Then according to the theoretically
proved convergence of the gradient descent method \cite{boyd2004convex},
Algorithm \ref{alg:Solving-AML-Eq.} converges to the stationary point
of Eq.~\eqref{eq:optimization_model}.

\section{Experiments\label{sec:Experiments}}

In this section, empirical investigations are conducted to validate
the effectiveness of AML. In detail, we first visualize the mechanism
of the proposed AML on a synthetic dataset. Then we compare the performance
of the proposed method AML (Algorithm \ref{alg:Solving-AML-Eq.})
with three classical metric learning methods (ITML \cite{davis2007information},
LMNN \cite{weinberger2009distance} and FlatGeo \cite{meyer2011regression_metric})
and five state-of-the-art metric learning methods (RVML \cite{NIPS2015_RVML},
GMML \cite{zadeh2016geometric}, ERML \cite{yang2016_empirical},
DRML \cite{harandi2017joint_metric_learning}, and DRIFT \cite{ye_learning_M-distance})
on seven benchmark classification datasets. Next, all methods are
compared on three practical datasets related to face verification
and image matching. Finally, the parametric sensitivity of AML is
studied.

\subsection{Experiments on Synthetic Dataset}

We first demonstrate the effectiveness of AML on a synthetic dataset
which contains $200$ training examples and $200$ test examples across
two classes. The data points are sampled from a $10$-dimensional
normal distribution, and are visualized by the first two principal
components \cite{abdi2010_PCA}. As shown in Figs.~\ref{fig:Visualization of AML}(a)
and (b), the training set is clean, but the test examples belonging
to two classes overlap in the intersection region and lead to many
ambiguous test data pairs.

Since GMML \cite{zadeh2016geometric} shares the same loss function
with our AML, and the only difference between GMML and AML is that
AML utilizes the adversarial points while GMML does not, so here we
only compare the results of GMML and AML to highlight the usefulness
of our adversarial framework. The training and test results of both
methods are projected to Euclidean space by using the learned metrics.
It can be found that the traditional metric learning model GMML simply
learns the optimal metric for training data, and thus its corresponding
projection matrix directly maps the data points onto the horizontal-axis
in the training set (Fig.~\ref{fig:Visualization of AML}(c)). However,
such a learned metric is confused by the data points in the test set
(Fig.~\ref{fig:Visualization of AML}(d)) as the two classes are very
close to each other in the test set. As a result, the two classes
are not well-separated by the learned metric. In contrast, the proposed
AML not only produces very impressive result on the training set (Fig.~\ref{fig:Visualization of AML}(e)),
but also generates very discriminative results on test set (Fig.~\ref{fig:Visualization of AML}(f)).
The test data belonging to the same class is successfully grouped
together while the examples of different classes are separated apart.
This good performance of AML owes to the adversarial data pairs as
shown by ``$\times$'' in Fig.~\ref{fig:Visualization of AML}(a).
Such difficult yet critical training pairs effectively cover the ambiguous
situations that may appear in the test set, and therefore enhancing
the generalizability and discriminability of our AML. 
\begin{figure}
\begin{spacing}{0.3}
\noindent \begin{centering}
\includegraphics[scale=0.38]{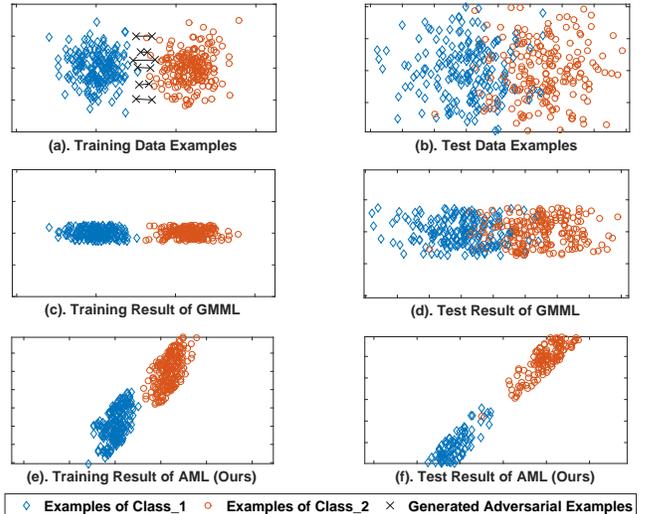}
\par\end{centering}
\end{spacing}
\begin{spacing}{0.5}
\noindent \caption{\label{fig:Visualization of AML}Visual comparison of GMML and the
proposed AML on synthetic dataset. Although the satisfactory training
result is obtained by the traditional metric learning model GMML,
it cannot well handle the test cases with ambiguous pairs. In contrast,
our proposed AML shows good discriminability on both training and
test sets. The reason lies in that the generated adversarial training
data pairs help to boost the discriminability of AML.}
\end{spacing}
\end{figure}

\subsection{Experiments on Classification \label{subsec:Classification-and-Verification}}

To evaluate the performances of various compared methods on classification
task, we follow existing works \cite{xie2013multi_metric_IJCAI,LinLiang_PAMI2017_Generalized_Metric}
and adopt the $k$-NN classifier ($k=5$) based on the learned metrics
to investigate the classification error rate. The datasets are from
the well-known UCI repository \cite{asuncion2007uci}, which include
\emph{Breast-Cancer}, \emph{Vehicle}, \emph{German-Credit},\emph{
Image-Segment}, \emph{Isolet}, \emph{Letters} and \emph{MNIST}. The
number of contained classes, examples and features are displayed in
Table \ref{tab:Classification-error-rates}. We compare all methods
over $20$ random trials. In each trial, $80\%$ of examples are randomly
selected as the training examples, and the rest are used for testing.
By following the recommendation in \cite{zadeh2016geometric}, the
training pairs are generated by randomly picking up $1000c(c-1)$
pairs among the training examples. The parameters in our method such
as $\alpha$ and $\beta$ are tuned by searching the grid $\{10^{-3},10^{-2},\cdots,10^{3}\}$.
The parameters for baseline algorithms are also carefully tuned to
achieve the optimal results. The average classification error rates
of compared methods are showed in Table \ref{tab:Classification-error-rates},
and we find that AML obtains the best results when compared with other
methods in most cases. 
\begin{table*}
\begin{singlespace}
\noindent \caption{\label{tab:Classification-error-rates}Classification error rates
of $k$-nearest neighbor classifier based on the metrics output by
different methods. Three numbers below each datasets correspond to
the feature dimensionality ($d$), number of classes ($c$) and number
of examples ($n$). The best two results in each dataset are highlighted
in {\color{red}red} and {\color{blue}blue}, respectively.}

\end{singlespace}
\begin{spacing}{0.7}
\noindent \centering{}%
\begin{tabular}{l|c|c|c|c|c|c|c|c}
\hline 
\multirow{2}{*}{{\scriptsize{}Methods}} & \textbf{\scriptsize{}Breast-Cancer} & \textbf{\scriptsize{}Vehicle} & \textbf{\scriptsize{}German-Credit} & \textbf{\scriptsize{}Image-Segment}{\scriptsize{} } & \textbf{\scriptsize{}Isolet} & \textbf{\scriptsize{}Letters} & \textbf{\scriptsize{}MNIST} & \multirow{2}{*}{{\scriptsize{}References}}\tabularnewline
 & {\scriptsize{}$9,2,699$} & {\scriptsize{}$18,4,848$} & {\scriptsize{}$24,2,1000$} & {\scriptsize{}$19,7,2310$} & {\scriptsize{}$617,26,7797$} & {\scriptsize{}$16,20,20000$} & {\scriptsize{}$784,10,4000$} & \tabularnewline
\hline 
{\scriptsize{}ITML} & {\scriptsize{}$.073\pm.010$} & {\scriptsize{}$.301\pm.051$} & {\scriptsize{}$.292\pm.032$} & {\scriptsize{}$.051\pm.022$} & {\scriptsize{}$.092\pm.011$} & {\scriptsize{}$.062\pm.002$} & {\scriptsize{}$.143\pm.023$} & {\scriptsize{}ICML 2007}\tabularnewline
{\scriptsize{}LMNN} & {\scriptsize{}$.052\pm.012$} & {\scriptsize{}$.234\pm.021$} & {\scriptsize{}$.274\pm.013$} & {\scriptsize{}$.027\pm.008$} & {\color{blue}{\scriptsize{}$.032\pm.012$}} & {\scriptsize{}$.042\pm.020$} & {\scriptsize{}$.174\pm.020$} & {\scriptsize{}JMLR 2009}\tabularnewline
{\scriptsize{}FlatGeo} & {\scriptsize{}$.075\pm.021$} & {\scriptsize{}$.283\pm.042$} & {\scriptsize{}$.322\pm.031$} & {\scriptsize{}$.042\pm.012$} & {\scriptsize{}$.065\pm.001$} & {\scriptsize{}$.082\pm.016$} & {\scriptsize{}$.146\pm.043$} & {\scriptsize{}JMLR 2011}\tabularnewline
{\scriptsize{}RVML} & {\scriptsize{}$.045\pm.005$} & {\color{blue}{\scriptsize{}$.223\pm.032$}} & {\scriptsize{}$.280\pm.020$} & {\scriptsize{}$.035\pm.006$} & {\scriptsize{}$.035\pm.015$} & {\scriptsize{}$.056\pm.010$} & {\scriptsize{}$.132\pm.008$} & {\scriptsize{}NIPS 2015}\tabularnewline
{\scriptsize{}GMML} & {\color{red}{\scriptsize{}$.040\pm.015$}} & {\scriptsize{}$.235\pm.035$} & {\color{blue}{\scriptsize{}$.273\pm.021$}} & {\scriptsize{}$.031\pm.005$} & {\scriptsize{}$.075\pm.012$} & {\scriptsize{}$.051\pm.001$} & \textbf{\scriptsize{}$.120\pm.005$} & {\scriptsize{}ICML 2016}\tabularnewline
{\scriptsize{}ERML} & {\scriptsize{}$.045\pm.002$} & {\scriptsize{}$.245\pm.040$} & {\scriptsize{}$.279\pm.012$} & {\scriptsize{}$.036\pm.003$} & {\scriptsize{}$.065\pm.001$} & {\scriptsize{}$.054\pm.010$} & {\scriptsize{}$.133\pm.016$} & {\scriptsize{}IJCAI 2016}\tabularnewline
{\scriptsize{}DRML} & {\scriptsize{}$.049\pm.011$} & {\scriptsize{}$.246\pm.053$} & {\scriptsize{}$.279\pm.034$} & {\color{blue}{\scriptsize{}$.026\pm.007$}} & {\scriptsize{}$.039\pm.014$} & {\color{blue}{\scriptsize{}$.041\pm.027$}} & {\color{blue}{\scriptsize{}$.118\pm.012$}} & {\scriptsize{}ICML 2017}\tabularnewline
{\scriptsize{}DRIFT} & {\scriptsize{}$.043\pm.005$} & {\scriptsize{}$.240\pm.023$} & {\scriptsize{}$.278\pm.026$} & {\scriptsize{}$.032\pm.007$} & {\scriptsize{}$.034\pm.033$} & {\scriptsize{}$.049\pm.012$} & {\scriptsize{}$.127\pm.034$} & {\scriptsize{}IJCAI 2017}\tabularnewline
{\scriptsize{}AML} & {\color{blue}{\scriptsize{}$.044\pm.001$}} & {\color{red}{\scriptsize{}$.203\pm.021$}} & {\color{red}{\scriptsize{}$.251\pm.012$}} & {\color{red}{\scriptsize{}$.024\pm.004$}} & {\color{red}{\scriptsize{}$.029\pm.009$}} & {\color{red}{\scriptsize{}$.032\pm.020$}} & {\color{red}{\scriptsize{}$.105\pm.001$}} & {\scriptsize{}Ours}\tabularnewline
\hline 
\end{tabular}
\end{spacing}
\end{table*}
\begin{figure*}
\begin{spacing}{0.3}
\noindent \begin{centering}
\includegraphics[scale=0.42]{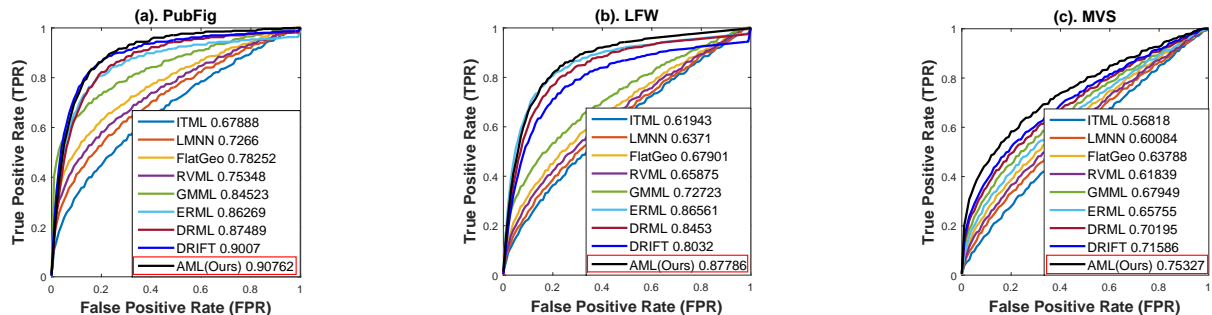}
\par\end{centering}
\end{spacing}
\begin{spacing}{0.3}
\caption{\label{fig:ROC-of-different}ROC curves of different methods on (a)
PubFig, (b) LFW and (c) MVS datasets. AUC values are presented in
the legends. }
\end{spacing}
\end{figure*}

\subsection{Experiments on Verification}

We also use two face datasets and one image matching dataset to evaluate
the capabilities of all compared methods on image verification task.
The \emph{PubFig }face dataset \cite{nair2010rectified_pubfig} consists
of of $2\times10^{4}$ pairs of images belonging to $140$ people,
in which the first $80\%$ pairs are selected for training and the
rest are used for testing. Similar experiments are performed on the
\emph{LFW} face dataset \cite{huang2007_LFW} which includes $13233$
unconstrained face images of $5749$ individuals. The image matching
dataset \emph{MVS }\cite{PAMI_2011_MVS_dataset}\emph{ }consists of
$64\times64$ gray-scale image sampled from 3D reconstructions of
the Statue of Liberty (LY), Notre Dame (ND) and Half Dome in Yosemite
(YO). By following the settings in \cite{simo2015discriminative},
LY and ND are put together to form a training set with over $10^{5}$
image patch pairs, and $10^{4}$ patch pairs in YO are used for testing.
The adopted features for above experiments are extracted by DSIFT
\cite{cheung2009_NSIFT} and Siamese-CNN \cite{zagoruyko2015_2-channel_siamase}
for face datasets (\emph{i.e.}\ PubFig and LFW) and image patch dataset
(\emph{i.e.}\ MVC), respectively. We plot the Receiving Operator
Characteristic (ROC) curve by changing the thresholds of different
distance metrics. Then the values of Area Under Curve (AUC) are calculated
to evaluate the performances quantitatively. From the ROC curves and
AUC values in Fig.~\ref{fig:ROC-of-different}, it is clear to see
that AML consistently outperforms other methods.

\subsection{Parametric Sensitivity}

In our proposed AML, there are two parameters which might influence
the model performance. Parameter $\alpha$ in Eq.~\eqref{eq:learning_loss}
determines the weights between original training data and generated
adversarial data, and parameter $\beta$ in Eq.~\eqref{eq:adversarial_min}
controls the size of neighborhood producting adversarial data.
\begin{figure}
\begin{spacing}{0.2}
\noindent \includegraphics[scale=0.4]{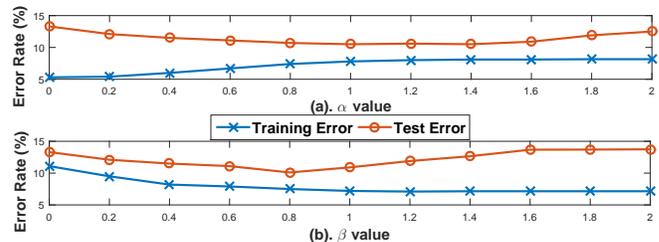}
\end{spacing}

\begin{spacing}{0.5}
\noindent \caption{\label{fig:Parametric Sensitivity}Parametric Sensitivity on MNIST
dataset. (a) Error rates under different $\alpha$ values $(\beta=1)$;
(b) Error rates under different $\beta$ values $(\alpha=1)$. }
\end{spacing}
\end{figure}

Intuitively, the growing of $\alpha$ increases the importance of
adversarial data, and the decrease of $\alpha$ makes the model put
more emphasize on the original training data. As shown in Fig. \ref{fig:Parametric Sensitivity}(a),
here we change the value of $\alpha$ and record the training error
and test error on the MNIST dataset that has been used in Section
\ref{subsec:Classification-and-Verification}. An interesting finding
is that, the training error grows when $\alpha$ increases in the
range $(0,1)$, but the test error consistently decreases at this
time. This is because tuning up $\alpha$ helps to alleviate the over-fitting
problem, and thus the test data with distribution bias from the training
data can be better dealt with. We also find that the training error
and test error make a compromise when $\alpha$ is around $1$, and
thus $1$ is an ideal choice for the parameter $\alpha$. Similarly,
the parameter $\beta$ varies within $(0,2)$ and the corresponding
training error and test error are recorded in Fig. \ref{fig:Parametric Sensitivity}(b).
It is clear to find that $\beta\approx0.8$ renders the highest test
accuracy and the performance is generally stable around $0.8$, which
mean that this parameter can be easily tuned for practical use. 

\section{Conclusion\label{sec:Conclusion-and-Further}}

In this paper, we propose a metric learning framework, named Adversarial
Metric Learning (AML), which contains two important competing stages
including confusion and distinguishment. The confusion stage adaptively
generates adversarial data pairs to enhance the capability of learned
metric to deal with the ambiguous test data pairs. To the best of
our knowledge, this is the first work to introduce the adversarial
framework to metric learning, and the visualization results demonstrate
that the generated adversarial data critically enriches the knowledge
for model training and thus making the learning algorithm acquire
the more reliable and precise metric than the state-of-the-art methods.
Furthermore, we show that such adversarial process can be compactly
unified into a bi-level optimization problem, which is theoretically
proved to have a globally convergent solver. Since the proposed AML
framework is general in nature, it is very promising to apply AML
to more deep neural networks based metric learning models for the
future work.

\small

\bibliographystyle{named}
\bibliography{ijcai18}

\end{document}